\newcommand*\bigcdot{\mathpalette\bigcdot@{.5}}
\newcommand*\bigcdot@[2]{\mathbin{\vcenter{\hbox{\scalebox{#2}{$\m@th#1\bullet$}}}}}
\theoremstyle{plain}
\newtheorem{theorem}{Theorem}[section]
\newtheorem{proposition}[theorem]{Proposition}
\theoremstyle{definition}
\newtheorem{example}[theorem]{Example}
\theoremstyle{remark}
\renewcommand{\vec}{\mathbf}
\definecolor{olivegreen}{rgb}{0,0.6,0}
\definecolor{mymauve}{HTML}{E60B42}
\definecolor{echopurple}{HTML}{9400D1}
\definecolor{camdrk}{HTML}{0099CC}
\definecolor{darkgry}{HTML}{333333}
\definecolor{echoblue}{HTML}{0099CC}
\definecolor{echonavy}{HTML}{0054B2}
\tikzstyle{vertex}=[circle,fill=darkgry!55,minimum size=20pt,inner sep=0pt, fill opacity=0.5]
\tikzstyle{selected vertex} = [vertex, fill=echopurple!15, fill opacity=0.5, text opacity=1.0, draw=mymauve, ultra thick]
\tikzstyle{select vertex} = [vertex, fill=echopurple!15, fill opacity=0.5, text opacity=1.0, draw=echonavy, ultra thick]
\tikzstyle{selectx vertex} = [vertex, fill=echopurple!24, fill opacity=1.0]
\tikzstyle{selectz vertex} = [vertex, fill=echonavy!24, fill opacity=1.0]
\tikzstyle{selectw vertex} = [vertex, fill=mymauve!24, fill opacity=1.0]
\tikzstyle{selectq vertex} = [vertex, fill=echopurple!24, fill opacity=1.0]
\tikzstyle{selectg vertex} = [vertex, fill=olivegreen!24, fill opacity=1.0]
\tikzstyle{edge} = [draw,thick,-]
\tikzstyle{full edge} = [draw,line width=5pt,-,red!50]
\tikzstyle{selected edge} = [draw,line width=5pt,-,echonavy!30]
\title[Asynchronous Algorithmic Alignment with Cocycles]{Asynchronous Algorithmic Alignment with Cocycles}
\author[A. Dudzik et al.]{%
Andrew Dudzik\\
Google DeepMind\\
\email{adudzik@google.com}\And
Tamara von Glehn\\
Google DeepMind\\
\email{tamaravg@google.com}\And
Razvan Pascanu\\
Google DeepMind\\
\email{razp@google.com}\And
Petar Veli\v{c}kovi\'{c}\\
Google DeepMind\\
\email{petarv@google.com}
}
\begin{document}

\maketitle

\begin{abstract}
State-of-the-art neural algorithmic reasoners make use of message passing in graph neural networks (GNNs). But typical GNNs blur the distinction between the definition and invocation of the message function, forcing a node to send messages to its neighbours at every layer, synchronously. When applying GNNs to learn to execute dynamic programming algorithms, however, on most steps only a handful of the nodes would have meaningful updates to send. 
One, hence, runs the risk of inefficiencies by sending too much irrelevant data across the graph. But more importantly, many intermediate GNN steps have to 
learn the identity functions, which is a non-trivial learning problem.
%
In this work, we explicitly separate the concepts of node state update and message function invocation. With this separation, we obtain a mathematical formulation that allows us to reason about asynchronous computation in both algorithms and neural networks. Our analysis yields several practical implementations of synchronous scalable GNN layers that are provably invariant under various forms of asynchrony.
\end{abstract}

\section{Introduction}


The message passing primitive---performing computation by aggregating information sent between neighbouring entities \citep{gilmer2017neural,BattagliaPLRK16}---is known to be remarkably powerful. 
%
Message passing is the core primitive in \emph{graph neural networks} \citep[GNNs]{velivckovic2023everything}, a prominent family of deep learning models.
Owing to the ubiquity of graphs as an abstraction for describing the \emph{structure} of systems, GNNs have enjoyed immense popularity across both scientific \citep{wang2023scientific} and industrial applications, including novel drug screening \citep{stokes2020deep,Liu2023}, designing next-generation machine learning chips  \citep{mirhoseini2021graph}, 
serving travel-time estimates \citep{derrow2021eta},
 particle physics~\citep{dezoort2023graph}, and settling long-standing problems in pure mathematics~\citep{davies2021advancing,blundell2021towards,williamson2023deep}.

Another active area of research for GNNs is \emph{neural algorithmic reasoning} \citep[NAR]{velivckovic2021neural}. NAR seeks to design neural network architectures that capture \emph{classical computation}, largely by learning to execute it \citep{velivckovic2019neural}. This is an important problem in the light of today's large-scale models, as they tend to struggle in performing exactly the kinds of computations that classical algorithms can trivially capture \citep{lewkowycz2022solving}. 

The use of GNNs in NAR is largely due to \emph{algorithmic alignment} \citep{xu2019can}: the observation that, as we increase the structural similarity between a neural network and an algorithm, it will be able to learn to execute this algorithm with improved sample complexity. 
%
Here, we make novel contributions to the theory of algorithmic alignment. Our approach ``zooms in'' on the theoretical analysis in \citep{dudzik2022graph}, which analysed computations---of both algorithms and GNNs---globally. Instead, we center our discussion on a \emph{node-centric}\footnote{As we will expand on later in the work, \emph{node} is a misnomer for what we precisely mean, but to improve the exposition we will rely on this term for now.} view: analysing computations around individual nodes in the graph, in isolation. 

\begin{figure}
    \centering
    \includegraphics[width=\linewidth]{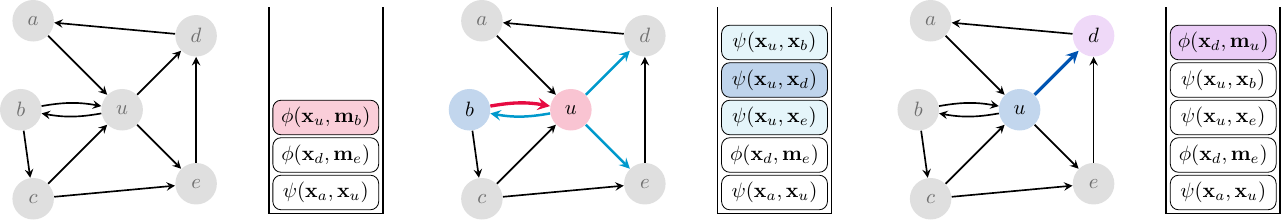}
    \caption{A possible execution trace of an asynchronous GNN. While traditional GNNs send and receive all messages synchronously, under our framework, at any step the GNN may choose to execute any number of possible operations (depicted here with a collection on the right side of the graph). Note that we do \emph{not} aim to \emph{implement} an asynchronous GNN---a feat concurrently explored by AMP \citep{faber2022asynchronous} and Co-GNN \citep{finkelshtein2023cooperative}---rather, we seek to build \emph{synchronous} GNNs that are \emph{invariant}---i.e., will yield identical output node embeddings---under various forms of asynchronous execution.} 
    \label{fig:async_gnn}
\end{figure}

This view allows us to study message passing under various synchronisation regimes and can help us identify choices of message functions that better align with target algorithms, in a manner that was not possible under previous frameworks---indeed, it allows us to theoretically justify the unreasonable effectiveness of architectures such as PathGNNs \citep{tang2020towards}. We refer to our new framework as \emph{asynchronous algorithmic alignment}, and formalise it using tools of category theory, monoid actions, and cocycles. To visualise what executing an asynchronous GNN might look like, refer to Figure \ref{fig:async_gnn}.  

\section{Message passing and (asynchronous) algorithms: Addition with carry}
To better understand the concept of algorithmic alignment, and in particular what we mean by a \emph{node-centric} perspective, we will utilise a simple illustrative example---namely \emph{addition with carry}---and present it in the framework of message passing that GNNs rely on.

{\bf Message passing framework.} We will use the definition of GNNs based on \citet{bronstein2021geometric}. Let a graph be a tuple of \emph{nodes} and \emph{edges}, $G=(V, E)$, with one-hop neighbourhoods defined as $\mathcal{N}_u = \{v\in{V}\ |\ (v, u)\in{E}\}$. Further, a node feature matrix ${\bf X}\in\mathbb{R}^{|{V}|\times k}$ gives the features of node $u$ as $\vec{x}_u$; we omit edge- and graph-level features for clarity. A \emph{(message passing)} GNN over this graph is then executed as:
\begin{equation}\label{eqn:gnn}
    \vec{x}_u' = \phi\left(\vec{x}_u, \bigoplus\limits_{v\in\mathcal{N}_u} \psi(\vec{x}_u, \vec{x}_v)\right)
\end{equation}
To put this equation in more abstract terms, we review the diagram of the message-passing framework of  \citet{dudzik2022graph}, with the addition of the message function $\psi$ to emphasise symmetry:
\[\begin{tikzcd}
	{\mathtt{args}} && {\mathtt{edge}} && {\mathtt{msgs}} \\
	\\
	\\
	\\
	{\mathtt{senders}} &&&& {\mathtt{receivers}}
	\arrow["\bigotimes"{description}, from=1-1, to=1-3]
	\arrow["\bigoplus"{description}, from=1-5, to=5-5]
	\arrow["{\mathsf{scatter}}"{description}, from=1-3, to=5-5]
	\arrow["{\mathsf{gather}}"{description}, from=5-1, to=1-3]
	\arrow["{\mathsf{copy}}"{description}, from=5-1, to=1-1]
	\arrow["{\psi}"{description}, from=1-3, to=1-5]
\end{tikzcd}\]


First, sender features ($\mathtt{senders}$) are duplicated along outgoing edges to form the arguments ($\mathtt{args}$) to a message function $\psi$. These arguments are collected into a list using the $\bigotimes$ operator---which is traditionally a concatenation, though as per \citet{dudzik2022graph}, it can be any operator with a monoidal structure. This list of arguments now lives on a new, transient, $\mathtt{edge}$ datatype. These two steps constitute a \emph{gather} operation. In Equation \ref{eqn:gnn}, this corresponds to copying the node features in $\vec{X}$, considered as a $V$-indexed tensor, to obtain feature pairs $(\vec{x}_u, \vec{x}_v)$, as an $E$-indexed tensor.

Note that the $\mathtt{senders}$ do not necessarily correspond to the more established notion of ``sender nodes'' in graph representation learning \citep{battaglia2018relational}---rather, we consider a node to be a sender if its features are \emph{necessary} to compute the message function $\psi$. Hence, in Equation \ref{eqn:gnn}, we consider both the features of nodes $u$ and $v$ to be ``senders'', rather than assuming that only $v$ is a sender. This allows us to easily extend this idea to messages spanning arbitrary numbers of inputs living on various places in the graph. For example, if edge features are used for $\psi$, we can include them as part of $\mathtt{senders}$ also.

Next, we perform a similar operation, a \emph{scatter}, by first applying the message function $\psi$ to the arguments, which computes the messages to be sent ($\mathtt{msgs}$). Then, messages are copied to suitable receivers, which aggregate along their incoming edges to form the final set of receiver node features ($\mathtt{receivers}$). In Equation \ref{eqn:gnn}, this refers to the application of the message function $\psi$, and the aggregation $\bigoplus$. This general gather-scatter paradigm can be seen throughout the literature on message passing, for example in the sheaf Laplacian of \citet{bodnar2023neural}.

Adding graph- and edge-level features complicates our exposition, also potentially creating some confusion within the diagrams above. In this work we will argue that, in order to simplify our theoretical treatment of the computations carried by GNNs and algorithms, \emph{it is useful to lift any graph component that has a \textbf{persistent} state---e.g. a feature vector updated at every layer---to the status of \textbf{node}, while \textbf{edges} are \textbf{transient} features created as part of the model's information flow}. For any graph, is trivial to construct a corresponding graph that has the above property. From now on, we assume that anything with a persistent state is a node in the graph, while edges are always transient.

{\bf Asynchronicity and constructing arguments.} Now we discuss the \emph{addition with carry} algorithm on a high level---we will focus on its details in Section \ref{sec:addcarr}. Intuitively, digits would represent nodes with persistent state, while edges show how computation needs to be carried, i.e. which digits need to be added. Because of the carry, we quickly notice that there is a misalignment between the algorithm and its current graph representation. On one hand, the algorithm requires a certain level of asynchrony, where digits need to be added in sequence, in order to be able to correctly accumulate the carry. The parallel computation imposed by having all nodes sending messages within their neighbourhoods, synchronously, can be problematic, requiring many nodes to learn the identity function on most steps. 
Fitting such sparse-update computation with (G)NNs has led to solutions that were either very brittle \citep{graves2014neural,graves2016hybrid} or requiring vast amounts of strong supervision \citep{velivckovic2020pointer,strathmann2021persistent}. This problem gets even further exacerbated when GNNs are used to fit multiple algorithms at once \citep{xhonneux2021transfer,ibarz2022generalist}. 


A second distinct issue in terms of executing the algorithm is: how will the carry be handled? Indeed, assuming the carry can not be added to the state of the node (which is meant to be only a single digit), the model is unable to properly represent the required computation.

To resolve this issue we can focus on how messages and the persistent state of the node get transformed into new states and \emph{arguments} for the message function $\psi$. In particular we augment the computation of the node in order to capture not only how the new state is constructed but also how the node constructs \emph{new arguments} for the next computational step, allowing us to chain these computations.  We capture the stateful nature of this computation in the following complementary diagram:

\adjustbox{scale=0.95,center}{
\begin{tikzcd}
	&& {\mathtt{edge}} \\
	\\
	\\
	\\
	{\mathtt{senders}} && {\mathtt{persistent}} \arrow[out=60,in=120,loop,"\phi{,}\delta",swap] && {\mathtt{receivers}}
	\arrow["{\mathsf{gather}}"{description}, from=5-1, to=1-3]
	\arrow["{\mathsf{scatter}}"{description}, from=1-3, to=5-5]
	\arrow[from=5-5, to=5-3]
	\arrow[from=5-3, to=5-1]
\end{tikzcd}
}



We will make one key assumption: the sender and receiver features should each have the structure of a \emph{monoid}---we have included an introduction to the theory of monoids in Appendix \ref{app:monoids}. This implies that we can think of both arguments and messages as being sent in chunks, which are assembled in order. We can think of these monoids as \emph{instruction queues}, where the monoid operation corresponds to instruction composition. Our main definitions will not assume commutativity, though it holds in many examples---we may reorder instructions arbitrarily if so.

\section{Node-centric view on algorithmic alignment}

For this section, we focus our attention on a single node, and explore the relationship between the message monoid, which we write as $(M,\cdot, 1)$ using multiplicative notation, and the argument monoid, which we write as $(A,+, 0)$ using additive notation.

Suppose that the internal state takes values in a set $S$. The process by which a received message updates the state and produces an argument is described by a function $M\times S \to S\times A$. This is equivalent, by currying, to a Kleisli arrow $M\to [S,S\times A] = \mathtt{state}_S(A)$ for the state monad.

Given a pair $(m,s)$, we denote the image under this function by $(m\bigcdot s, \delta_m (s))$, where $\bigcdot:M\times S \to S$ is written as a binary operation and $\delta:M\times S \to A$ is described by some argument function $\delta$. First, we look into the properties of $\bigcdot$. 

Each incoming message (an element of $M$) transforms the state (an element of $S$) in some way. We assume that the multiplication of $M$ corresponds to a composition of these transformations. Specifically, we ask that $\bigcdot$ satisfies the unit and associativity axioms:
\begin{equation}\label{action}
\begin{split}
1\bigcdot s &= s \\
(n\cdot m)\bigcdot s &= n\bigcdot (m\bigcdot s)
\end{split}
\end{equation}
Next, we interpret Equation \ref{action} in terms of the argument function $\delta$. In the first equation, the action $1\bigcdot s$ generates an argument $\delta_1(s)$. But on the right-hand side there is no action, so no argument is produced. In order to process both sides consistently, $\delta_1(s)$ must be the zero argument.

Similarly, in the second equation, the left-hand side produces one argument $\delta_{n\cdot m}(s)$, while the right-hand side produces two, $\delta_m(s)$ and then $\delta_n(m\bigcdot s)$. Setting these equal, we have the following:
\begin{equation}\label{argumentequation}
\begin{split}
\delta_1(s) &= 0 \\
\delta_{n\cdot m}(s) &=  \delta_n(m\bigcdot s) + \delta_m(s)
\end{split}
\end{equation}
Equivalently, we could arrive at these equations by extending the action of $M$ on $S$ to one of $M$ on $S\times A$, as follows. Given a pair $(s,a)$ of a state $s$ together with an outgoing argument $a$, we act on the state, while generating a new argument that gets added to the old one: $m\star (s,a) := (m\bigcdot s, \delta_m(s)+a)$. One can show that Equation \ref{argumentequation} is exactly the unit and associativity axioms for the operator~$\star$.

{\bf Cocycles.} Equation \ref{argumentequation} can be rewritten in a more elegant form, but this requires a few preliminaries.

First, consider the set $F = [S, A]$ of \emph{readout functions}; functions mapping states to corresponding arguments. $F$ inherits structure from both $S$ and $A$. First, $F$ is a monoid because $A$ is; we define a zero function $0(s):= 0$ and function addition $(f+g)(s) := f(s) + g(s)$.

Second, $F$ has an action of $M$, given by $(f\bigcdot m)(s):=f(m\bigcdot s)$. This is a \emph{right} action rather than a left action; the associativity axiom $f\bigcdot (m\cdot n) = (f\bigcdot m)\bigcdot n$ holds, but the reversed axiom may not.

With these definitions, if we write $\delta$ in its curried form $D: M\to [S, A]$, we can rewrite Equation \ref{argumentequation}:
\begin{equation}\label{cocycle}
\begin{split}
D(1) &= 0 \\
D(n\cdot m) &= D(n)\bigcdot m + D(m)
\end{split}
\end{equation}
Equation \ref{cocycle} specifies that $D$ is a \emph{1-cocycle}, otherwise known as a \emph{derivation}. To understand the latter term, consider the more general situation where $F$ also has a left action of $M$. Then we may write $D(n\cdot m) =  D(n)\bigcdot m + n\bigcdot D(m)$, which is just the Leibniz rule for the derivative. Our equation describes the special case where this left action is trivial.

We summarise the above chain of deductions as follows:

\begin{proposition}
A node equipped with a rule $D$ for generating arguments is invariant to asynchrony, i.e. its output does not depend on the grouping of incoming messages, if and only if $D$ is a 1-cocycle.
\end{proposition}


{\bf Edges: homomorphisms and multimorphisms.}
\label{multimorphisms} 1-cocycles also appear in the literature under the name \emph{crossed homomorphisms}. Indeed, if the right action of $M$ on $[S,A]$ is trivial, the above equations are $D(1)=0$ and $D(n\cdot m) =  D(n) + D(m)$, i.e. $D$ is a homomorphism of monoids.

We can use this observation to describe edges---at least, edges with only one input---as stateless nodes. We simply reverse the roles of argument and message monoids: for asynchronous communication over edges, we require that the message function $\psi$ satisfies $\psi(0) = 1$ (null arguments generate null messages) and $\psi(a+b)=\psi(a)\cdot\psi(b)$ (aggregating before $\psi$ gives same results as aggregating after). In other words:

\begin{proposition}
An edge with a single-input message function $\psi$ supports asynchronous invocation if and only if $\psi$ is a homomorphism of monoids.
\end{proposition}

How to extend to edges that take $k$ inputs, for example, edge features or receiver features? \citet{dudzik2022graph} proposed repeated multiplication in a semiring, but for asynchrony, we only require the much weaker requirement that $\psi$ be a \emph{multimorphism}, i.e. given any fixed values for $k-1$ of the variables, $\psi$ is a homomorphism in the remaining variables.\footnote{This is the analogue of multilinearity for maps of vector spaces.}

With this in mind, if $M_1, \cdots , M_k$ are commutative monoids, we simply treat a stateful node with multiple message inputs $M_1, \cdots , M_k$ as a stateful node with a single message input given by the tensor product of commutative monoids $M_1 \otimes \cdots , \otimes M_k$\footnote{Commutativity is needed here, otherwise the tensor product may not exist.}. In the stateless case, this is equivalent, by the universal property of the tensor product, to the above condition that $\psi$ is a multimorphism, so this is a convenient mathematical way to describe asynchrony with respect to multiple arguments.


{\bf Example: The natural numbers and addition with carry.}\label{sec:addcarr} If $M=(\mathbb{N},+,0)$ is the monoid of natural numbers under addition,\footnote{We are using additive notation here, so the identity element is denoted with $0$ instead of $1$.} then an action of $M$ on a set $S$ is equivalently an endofunction $\pi: S\to S$, with $m\cdot s := \pi^m (s)$. We now characterise all possible cocycles $M \to [S,A]$.

\begin{proposition}
Given any function $\omega: S\to A$, there is a unique $1$-cocycle $\delta$ with $\delta_1 = \omega$.
\end{proposition}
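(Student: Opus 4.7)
The plan is to exploit that $\mathbb{N}$ is freely generated as a monoid by the single element $1$, so the cocycle axioms force the values of $\delta_n$ by induction on $n$, and then verify that the resulting explicit formula does in fact satisfy the cocycle identity.

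First I would rewrite Equation \ref{argumentequation} in additive notation for $M$: a $1$-cocycle $\delta$ must satisfy $\delta_0(s) = 0$ and $\delta_{n+m}(s) = \delta_m(s) + \delta_n(\pi^m(s))$ for all $n,m \in \mathbb{N}$ and $s \in S$. Specialising to $m=1$ gives $\delta_{n+1}(s) = \omega(s) + \delta_n(\pi(s))$, and this recursion, together with $\delta_0 = 0$, determines $\delta_n$ uniquely from $\omega$ by induction on $n$. This handles uniqueness.

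For existence I would propose the closed form
$$\delta_n(s) \;:=\; \sum_{i=0}^{n-1} \omega(\pi^i(s)),$$
with the empty sum interpreted as $0 \in A$. Then $\delta_0 = 0$ and $\delta_1 = \omega$ hold on the nose. To check the general cocycle identity, I would split
$$\delta_{n+m}(s) \;=\; \sum_{i=0}^{m-1} \omega(\pi^i(s)) \;+\; \sum_{i=m}^{n+m-1} \omega(\pi^i(s))$$
and reindex the second sum by $j = i - m$, using $\pi^{j+m} = \pi^j \circ \pi^m$, to identify it with $\delta_n(\pi^m(s))$; the first sum is $\delta_m(s)$ by definition.

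I expect no real obstacle. The only subtlety worth flagging is notational: the symbol $1$ in the statement $\delta_1 = \omega$ refers to the natural number $1 \in \mathbb{N}$, not to the monoid identity of $M$ (which, under additive notation, is $0$). Conceptually, the proposition is just the familiar fact that a cocycle on a free monoid is determined by, and may be freely prescribed on, the generators.
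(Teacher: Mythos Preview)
Your proposal is correct and matches the paper's proof almost verbatim: both argue uniqueness by specialising the cocycle identity to a generator to obtain an inductive recursion, and both prove existence via the same closed form $\delta_n(s)=\sum_{i=0}^{n-1}\omega(\pi^i(s))$ with the identical split-and-reindex verification. The only cosmetic difference is that you derive the recursion $\delta_{n+1}(s)=\omega(s)+\delta_n(\pi(s))$ by setting $m=1$, whereas the paper sets the other variable to $1$ to get $\delta_{n+1}(s)=\delta_n(s)+\omega(\pi^n(s))$; both determine $\delta$ uniquely.
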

\begin{proof}
If $\delta$ is a cocycle and $\omega = \delta_1$, the cocycle condition gives us an inductive definition of $\delta$: $\delta_{n+1}(s) = \delta_n(s) + \delta_1 (s+n)$. On the other hand, given $\omega$ we can define: $\delta_n(s) := \sum_{i=0}^{n-1} \omega(s+i)$.

We note that $\delta_{m+n} = \sum_{i=0}^{m-1} \omega(s+i) + \sum_{j=m}^{m+n-1} \omega(s+j) =\sum_{i=0}^{m-1} \omega(s+i) + \sum_{j=0}^{n-1} \omega(m+s+j) = \delta_m(s) + \delta_n(m+s)$ so $\delta$ is a $1$-cocycle.
\end{proof}
Now, we consider the example of digit arithmetic with carry. Suppose that $S = \{0,\cdots, 9\}$ is the set of digits in base $10$, and our action is given by the permutation $\pi(s) = \overline{s+1}$, where the bar denotes reduction modulo $10$. If $M = (\mathbb{N}, +)$, we can define an argument function based on the number of carries produced when $1$ is added to $s$, $m$ times: $\delta_m(s) := \left\lfloor\frac{m+s}{10}\right\rfloor$.

\begin{proposition}
$\delta$ is a $1$-cocycle $M\to [S,A]$.
\end{proposition}
\begin{proof}
We directly verify that, for all $m,n\in\mathbb{N}$, $s\in \{0,\ldots, 9\}$: $\left\lfloor\frac{m+n+s}{10}\right\rfloor = \left\lfloor\frac{m+\overline{n+s}}{10}\right\rfloor + \left\lfloor\frac{n+s}{10}\right\rfloor$. This follows quickly by induction on $m$: for $m=0$ the two sides are equal, and as $m$ increments by $1$, the LHS and RHS are both incremented if and only if $m+n+s+1\equiv 0\pmod {10}$.
\end{proof}

\section{(A)synchrony and idempotence in GNNs}

Having discussed the general mathematical situation, we can now leverage the cocycle conditions to more rigorously discuss the synchronisation of GNN operations (such as gathers, scatters, and applications of $\psi$ or $\phi$). In the process of our discussion, we will show how our theory elegantly re-derives and extends a well-known neural algorithmic reasoning model.

In the study of algorithms, we are especially interested in a certain property of monoids: \emph{idempotence}. We start by proving a highly useful fact about the relationship between idempotence and cocycles.

\subsection{Idempotent monoids}

We say that $A$ is \emph{idempotent} if $a+a=a$ for all $a\in A.$

\begin{proposition}\label{idempotence}
Suppose that $S=A$. Define:

\[\delta_m(s) := \begin{cases*}
0 & if $m=1$ \\
m \bigcdot s & otherwise
\end{cases*}\]

If $\delta$ is a $1$-cocycle, then $A$ is idempotent. If $M=S=A$ and $\cdot = +$, the converse holds.
\end{proposition}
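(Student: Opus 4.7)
The plan is to analyze the cocycle axioms of Equation~\ref{argumentequation} for the specific $\delta$ given in the proposition, splitting into cases based on whether $n$, $m$, and $n \cdot m$ equal the identity $1 \in M$. The axiom $\delta_1(s) = 0$ is immediate from the case split in the definition, so all the work lies in the multiplicative axiom $\delta_{n \cdot m}(s) = \delta_m(s) + \delta_n(m \bigcdot s)$. Cases with $n = 1$ or $m = 1$ reduce trivially—each side collapses to a single action term—so the substantive work is in the cases $n, m \neq 1$, split further by whether $n \cdot m = 1$.

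For the converse direction, I would assume $A$ is idempotent and $M = S = A$ with $\cdot = +$, so the natural action is $m \bigcdot s = m + s$. In the main subcase ($n, m \neq 0$ and $n + m \neq 0$), one has $\delta_m(s) = m + s$ and $\delta_n(m + s) = n + m + s$, giving the right-hand side $(m + s) + (n + m + s) = n + 2m + 2s$; idempotency then collapses $2m + 2s$ to $m + s$, matching $\delta_{n + m}(s) = (n + m) + s$. The auxiliary subcase $n + m = 0$ with $n, m \neq 0$ is vacuous in any idempotent commutative monoid: adding $n$ to both sides of $n + m = 0$ and applying $n + n = n$ yields $n = 0$, contradicting $n \neq 0$.

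For the forward direction, I would extract algebraic constraints from the cocycle axiom by carefully choosing $n$, $m$, $s$. In the substantive subcase ($n, m, n \cdot m \neq 1$), rearranging yields $(m \bigcdot s) + ((n \cdot m) \bigcdot s) = (n \cdot m) \bigcdot s$, an asymmetric absorption relation saying that $m \bigcdot s$ is absorbed additively by $(n \cdot m) \bigcdot s$. The auxiliary subcase $n \cdot m = 1$ with $n, m \neq 1$ forces $(m \bigcdot s) + (n \bigcdot (m \bigcdot s)) = 0$, which rules out nontrivial additive inverses in $A$. Specializing $n = m$ and varying $s$ in the absorption relation, one aims to upgrade it to the pointwise statement $a + a = a$; in the setting $M = A$ with $\cdot = +$, every $a \in A$ is realized as $a \bigcdot 0$, so the absorption then translates directly into $a + a = a$.

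The main obstacle I expect is in the forward direction: the absorption relation is genuinely asymmetric ($a$ absorbed by $b$, with $a$ and $b$ not a priori equal), and converting it into the symmetric identity $a + a = a$ requires combining multiple instances of the cocycle axiom and then using $M = S = A$ to ensure every $a$ is in the image of the action. This is precisely why the converse is stated only under the stronger hypotheses $M = S = A$ and $\cdot = +$: they provide the structural handle needed to promote the cocycle's absorption relations into full idempotency.
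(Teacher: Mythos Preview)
Your converse direction is correct and follows the paper's route: under $M=S=A$ with $\cdot=+$, the nontrivial instance of the cocycle equation reads $n+m+s=(m+s)+(n+m+s)$, which idempotency collapses to $n+m+s$. You are more careful than the paper in explicitly dispatching the subcase $n+m=0$ with $n,m\neq 0$ as vacuous in an idempotent monoid; the paper's proof silently skips this.

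For the forward direction there is a genuine gap, and you rightly flag it as the ``main obstacle'' without closing it. From the absorption relation $(m\bigcdot s)+((n\cdot m)\bigcdot s)=(n\cdot m)\bigcdot s$, valid only when $n,m,n\cdot m\neq 1$, you never actually deduce $a+a=a$; e.g.\ specialising $n=m$ and $s=0$ under $\cdot=+$ yields $3m=2m$, not $2m=m$. The paper's own argument is the step to scrutinise rather than emulate: it rewrites the cocycle equation as $(n\cdot m)\bigcdot s = m\bigcdot s + n\bigcdot(m\bigcdot s)$ and then sets $n=m=1$ to obtain $s=s+s$. But that rewriting uses the ``otherwise'' branch of $\delta$, which is unavailable precisely when $n$ or $m$ equals $1$; at $n=m=1$ the actual cocycle axiom is $0=0+0$, not $s=s+s$. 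Indeed the forward implication appears false even under the stronger hypotheses: take $A=\{0,a,b\}$ with identity $0$, $a+a=b$, $a+b=b+b=b$, and $M=S=A$, $\cdot=+$, $m\bigcdot s=m+s$. For nonzero $n,m$ one has $n+m=b$, so both sides of the cocycle equation equal $b$ for every $s$, and the remaining cases are trivial; thus $\delta$ is a $1$-cocycle, yet $a+a=b\neq a$. So the obstacle you identify is not a missing trick on your part but a defect in the statement as written.
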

\begin{proof}
In this case, the second equation of Equation \ref{argumentequation} becomes $(n\cdot m)\bigcdot s=n\bigcdot(m\bigcdot s)+m\cdot s$. Setting $n=m=1$ gives $s=s+s$.

If $\cdot = +$ and $A$ is idempotent, then the equation is $n+m+s=(n+m+s)+(m+s)$, which holds since $(n+m+s)+(m+s)=n+((m+s)+(m+s))=n+m+s$.
\end{proof}

\subsection{Making $\phi$ a cocycle enables asynchrony}

Now we can explicitly formalise the residual map $\phi$ in Equation \ref{eqn:gnn}: it is just another description of what we have called an ``action''. That is, we have $\phi(s, m) = m\bigcdot s$ for all node features $s$ and (aggregated) non-null messages $m$.

{\bf Message aggregation asynchrony.}
$\bigoplus$ is usually taken, axiomatically, to be the operation of a commutative monoid \citep{ong2022learnable}. This is to emphasise the importance of message aggregation that does not depend on the order in which the messages are received. That is, letting $\bigoplus$ define a commutative monoid operation already allows us to support a certain form of asynchrony: we can aggregate messages online as we receive them, rather than waiting for all of them before triggering $\bigoplus$.

{\bf Node update asynchrony.}
Similarly, the axiom that $\phi$ defines an associative operation, as in Equation \ref{action}, corresponds to another type of asynchrony. When $\phi$ satisfies the associativity equation:
\begin{equation}
\phi(s, m\bigoplus n) = \phi(\phi(s, m), n)
\end{equation}
this tells us that $\phi$ itself can be applied asynchronously. Put differently, after each message arrives into the receiver node, we can use it to update the node features by triggering $\phi$, without waiting for the messages to be fully aggregated.

One common way to enforce associativity is to take $\phi = \bigoplus$, in which case the associativity of $\phi$ follows from the assumed associativity of $\bigoplus$.

{\bf Argument generation asynchrony.}
While these two conditions allow us to reason about the asynchrony in how incoming messages are processed, and the asynchrony in how the node's features are updated, what does the cocycle condition (Equation \ref{argumentequation}) mean for a GNN? 

Recall, the cocycle condition concerns the argument function $\delta$, which determines which arguments are produced after a node update. Specifically, the cocycle condition allows us to express the arguments produced by receiving two messages together ($\delta_{n\cdot m}$) as a combination of the arguments produced by receiving them in isolation ($\delta_n$ and $\delta_m$). Therefore, it gives us a mechanism that allows for each sender node to prepare their arguments to the message function, $\psi$, asynchronously, rather than waiting for all the relevant node updates to complete first.

Note that Equation \ref{eqn:gnn} does not distinguish between node features and sent arguments. In other words, it implicitly defines the argument function $\delta_m(s) = m\bigcdot s = \phi(s, m)$. Knowing this, we can leverage the converse direction of Proposition \ref{idempotence}, to provide some conditions under which the update function $\phi$ will satisfy the cocycle condition:

First, we require the state update (which we previously set to $\phi=\bigoplus$) to be idempotent. Not all commutative monoids are idempotent; $\bigoplus=\max$ is idempotent, while aggregators like $\operatorname{sum}$ are not. Note that this aligns with the known utility of the $\max$ aggregation in algorithmic tasks \citep{velivckovic2019neural,xu2020neural,dudzik2022graph}.

Second, we require $M=S=A$, which means that the messages sent must come from the same set of values as the node features and the arguments to the message function. This means that the dimensionality of the node features, arguments and messages should be the same---or alternately, that invoking the message or update functions should not change this dimensionality. This can be related to the encode-process-decode paradigm \citep{hamrick2018relational}: advocating for the use of a \emph{processor module}, repeatedly applied to its inputs for a certain number of steps. 

Note that this is only one possible way to enforce the cocycle condition in $\phi$; we remark that there might be more approaches to achieving this, including approximating the cocycle condition by optimising relevant loss functions.


\subsection{The rich asynchrony of $\max$-$\max$ GNNs: Rediscovering PathGNNs}

It is worthwhile to take a brief pause and collect all of the constraints we have gathered so far:
\begin{inparaenum}
    \item[(1)] $\bigoplus$ is a commutative monoid;
    \item[(2)] $\phi = \bigoplus$;
    \item[(3)] $\bigoplus$ must be idempotent, e.g., $\bigoplus=\max$;
    \item[(4)] The message function should output messages of the same dimensionality as node features. 
\end{inparaenum}

We can observe that Equation \ref{eqn:gnn} now looks as follows:
\begin{equation}\label{eqn:pathgnn}
    \vec{x}_u' = \max\left(\vec{x}_u, \max_{v\in\mathcal{N}_u} \psi(\vec{x}_u, \vec{x}_v)\right)
\end{equation}
Such a $\max$-$\max$ GNN variant allows for a high level of \emph{asynchrony}: messages can be sent, received and processed in an arbitrary order, arguments can be prepared in an online fashion, and it is mathematically guaranteed that the outcome will be identical as if we fully synchronise all of these steps, as is the case in typical GNN implementations. We also remark that Equation \ref{eqn:pathgnn} is almost exactly equal to the hard variant of the PathGNN model from \citet{tang2020towards}---the only missing aspect is to remove the dependence of $\psi$ on the receiver node (i.e., to remove $u$ from the $\mathtt{senders}$).

The only remaining point of synchronisation is the invocation of the message function, $\psi$; messages can only be generated once all of the arguments for the message function are fully prepared (i.e., no invocations of $\phi$ are left to perform for the relevant sender nodes).

\subsection{Extending PathGNNs with multimorphisms: Message generation asynchrony}



PathGNNs give one example of \emph{isotropic} message passing, where the message function for each edge, $\psi_e$, is a function of a single variable, the sender argument, and produces a single output, a receiver message. That is, we have a function $\psi_e:(A_{s(e)},+,0)\to(M_{t(e)}, \cdot, 1)$.


As we elaborated in Section \ref{multimorphisms}, the condition needed for argument asynchrony and message asynchrony to be compatible is that $\psi$ is a monoid homomorphism. This means that $\psi$ can be called---and messages generated---even before its arguments are fully ready, so long as it is called again each time the arguments are updated.

Note that PathGNNs, in their default formulation, do not always satisfy this constraint. We have, therefore, used our analysis to find a way to extend PathGNNs to a \emph{fully asynchronous} model. One way to obtain such a GNN---assuming it is already isotropic---is to make $\psi$ be a \emph{tropical linear} transformation. That is, $\psi$ would be parametrised by a $k\times k$ matrix, which is multiplied with $\vec{x}_v$, but replacing ``$+$'' with $\max$ and ``$\times$'' with $+$.

In the non-isotropic case \citep{bresson2017residual}, where $\psi$ may take multiple arguments, different properties of $\psi$ may correspond to different forms of asynchrony. Since various DP algorithms can be parallelised in many different ways, we consider this case a promising avenue for future exploration. In Section \ref{sec:eval} we give empirical results for multimorphisms as asynchronous message functions in anisotropic GNNs.

\subsection{Example: Semilattices and Bellman-Ford}

Originally, PathGNN was designed to align with the Bellman-Ford algorithm \citep{bellman1958routing}, due to its claimed structural similarity to the algorithm's operation---though this claim was not rigorously established. Now, we can rigorously conclude where this alignment comes from: the choice of aggregator ($\max$) and making $\psi$ only dependent on one sender node is fully aligned with the Bellman-Ford algorithm (as in \citet{xu2020neural}), \emph{and} both PathGNNs and Bellman-Ford can be implemented asynchronously without any errors in the final output (a novel conclusion enabled by our mathematical framework). We have already showed that PathGNNs satisfy the cocycle condition; now we will prove the same statement about Bellman-Ford, in order to complete our argument.




Let $P$ be any join-semilattice, i.e. a poset with all finite joins, including the empty join, which we denote $0$ as it is a lower bound for $P$. A standard result says that $P$ is equivalently a commutative, idempotent monoid $(P, \vee, 0)$. When $P$ is totally ordered, we usually write $\vee = \sup$ or $\max$.

In Bellman-Ford, $P$ will generally be either the set of all path lengths, (taken with a negative sign) or the set of all relevant paths ending at the current vertex, equipped with a total order that disambiguates between paths of equal length.

We set $M=S=A=P$, with action of $M$ on $S$ given by $m\cdot s := m\vee s$.

Since we want to inform our neighbors when our state improves, i.e. gets smaller, we could likewise define our argument function $\delta$ by $\delta_m (s) := m\vee s$, as this in fact satisfies the cocycle equation due to Proposition \ref{idempotence}. However, this will lead to an algorithm that never terminates, as new redundant arguments will continue to be generated at each step.

So we define the argument function a bit more delicately:
\[\delta_m(s) := \begin{cases*}
0 & if $m\leq s$ \\
m \vee s & otherwise
\end{cases*}\]

\begin{proposition}
$\delta$ is a $1$-cocycle $M\to [S,A]$.
\end{proposition}
\begin{proof}
Pick $m,n \in M$, $s\in S$. We wish to show that $\delta_{m\vee n}(s) = \delta_m(n\vee s) \vee \delta_n(s)$.

If $m,n\leq s$ then both sides equal $0$. If $m\leq s$ but $n\not\leq s$ then the LHS is $m\vee n\vee s=n\vee s$ while the RHS is $0\vee (n\vee s)$. If $n\leq s$ but $m\not\leq s$ then the LHS is $m\vee n \vee s = m\vee s$ while the RHS is $(m\vee n \vee s)\vee 0 = m\vee s$.
\end{proof}
Now, we can see that the definition of $\delta$ prevents new arguments from being generated once the optimal value has been obtained, if we take the convention that zero arguments are ignored. In particular, if $P$ is well-ordered, only finitely many arguments can be generated regardless of the input.

Lastly, note that Bellman-Ford is also a perfect example of message generation asynchrony; its $\psi_e$, for each edge, simply adds the edge length of $e$ to the sender node distance. $+$ is easily seen to distribute over $\max$, making $\psi_e$ a monoid homomorphism. All conditions combined, Bellman-Ford can be indeed made fully asynchronous, without sacrificing the fidelity of the final output. This solidifies the algorithmic alignment of our PathGNN variant with Bellman-Ford.

\section{Evaluating asynchrony-invariant GNNs}\label{sec:eval}

In our paper, we introduced three\footnote{Strictly speaking, our analysis predicts \emph{four} levels, but as in typical GNNs (of Equation \ref{eqn:gnn}) we do not assign special semantics to the argument-generating function $\delta$, so we merge the second ($\phi$ associative) and third ($\phi$ idempotent) level for the purpose of this analysis, leaving exploration of the extra expressivity of $\delta$ to later work.} distinct \emph{levels} of asynchrony-invariant GNNs:
\begin{itemize}
    \item[\bf L1] A GNN with a \emph{commutative monoid} aggregator \cite{ong2022learnable}, $\bigoplus$, is invariant to message receiving order;
    \item[\bf L2] A GNN which, additionally, has an associative and idempotent update function \cite{tang2020towards}, $\phi$, is invariant to repeated calls to the update function.
    \item[\bf L3] A GNN which, additionally, has a message function, $\psi$, which is a \emph{monoid multimorphism}, is invariant to repeated calls to the message function.
\end{itemize}

To supplement our theory, and illustrate how progressing up these three levels practically results in a more robust algorithmic executor, we perform comparative experiments on the CLRS-30 benchmark \cite{velivckovic2022clrs}. Specifically, we closely follow the single-task CLRS-30 experimental setup of \citet{ibarz2022generalist}, incorporating inverted pointer features as discovered by \citet{bevilacqua2023neural}; we defer to these papers for concrete details on the train/test data pipeline incorporated.

In order to avoid confounding factors, we further modify the CLRS-30 baselines to focus on linear message/update functions, and do not deploy triplet messages \cite{dudzik2022graph}. Within this framework, we focus on the following three hyperparameter settings, corresponding to the three levels:
\begin{itemize}
    \item[\bf L1] Let $\bigoplus=\sum$, $\psi$ a linear layer, and $\phi$ a linear layer with ReLU activation.
    \item[\bf L2] Let $\bigoplus=\phi=\max$, and $\psi$ a linear layer.
    \item[\bf L3] Let $\bigoplus=\phi=\max$, and $\psi$ is a $\log$-semiring bilinear layer, potentially preceded by an additional linear layer (we treat this architectural choice as a per-task hyperparameter).
\end{itemize}
A $\log$-semiring bilinear layer is a matrix multiplication of the form $\psi(\vec{x}, \vec{y}) = \vec{A}_{\log}\vec{x} \times \vec{B}_{\log}\vec{y}$ where we ``re-interpret'' ``$+$'' to be the \emph{logsumexp} operator (i.e. $x +_{lse} y := \log(\exp(x) + \exp(y))$), and ``$\times$'' to be \emph{addition} (i.e. $x\times_{lse} y := x + y$). We implemented this bilinear layer using SynJax \citep{stanojevic2023synjax}.

We use the $\log$-semiring as a smooth approximation of the previously described tropical linear (where ``$+$'' would be $\max$, and ``$\times$'' would be $+$). This is a choice made due to the large sparsity in gradients for a tropical linear layer, which makes learning more challenging in practice---a phenomenon concurrently empirically observed for aggregation functions by \citet{mirjanic2023latent}.

\begin{figure}
    \centering
    \includegraphics[width=\linewidth]{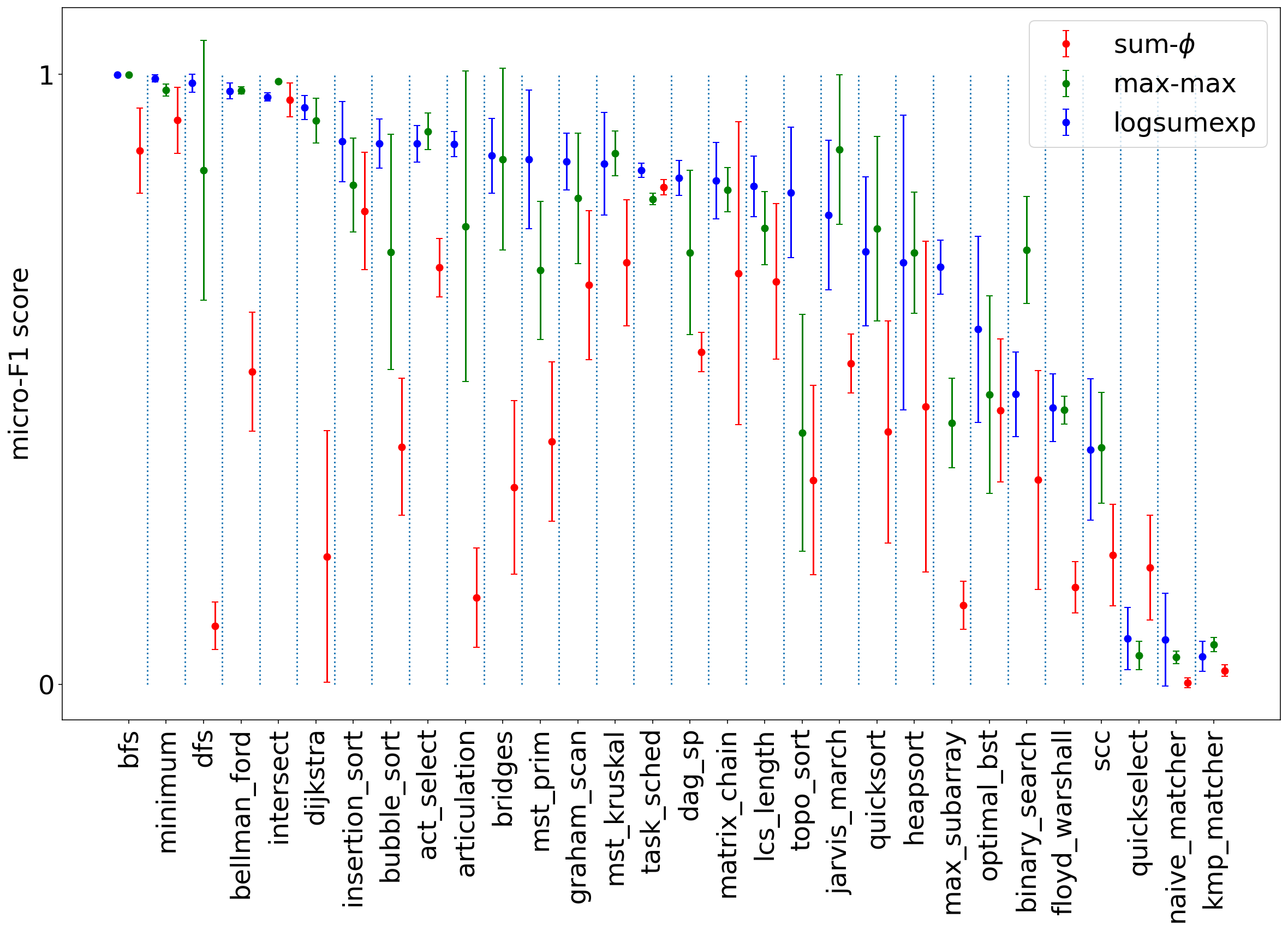}
    \caption{Test (out-of-distribution) results across all tasks in CLRS-30, for the three models described in Section \ref{sec:eval}, averaged across six seeds.}
    \label{fig:perf}
\end{figure}

The comparative results across the algorithmic execution tasks in CLRS-30, in the out-of-distribution regime, are provided in Figure \ref{fig:perf}. The overall ranking of the methods in terms of their ``performance profile'' across the thirty tasks closely matches the three levels, with the overall best results achieved by the $\log$-semiring message function, and the second-best results achieved by $\max$-$\max$ GNNs \cite{tang2020towards}. It is very useful to comment on two ``opposite'' classes of algorithms in light of these results. 

We first discuss the CLRS-30 algorithms that are ``embarrassingly parallel'', in the sense that each node can continuously broadcast its state until convergence is reached, and meaningful updates may happen anywhere in the graph. BFS \citep{moore1959shortest}, Bellman-Ford \citep{bellman1958routing} and Floyd-Warshall \citep{floyd1962algorithm} are all standard examples. For all of the above algorithms, GNNs satisfying the cocycle condition (levels 2 and 3) generalise substantially more favourably than the basic, level-1, expressive GNN.

Conversely, we have algorithms that are ``purely sequential'', in that at every step, exactly one node is broadcasting meaningful state, and exactly one of its neighbours may get a meaningful update. In this setting, nearly all messages are redundant! Therefore, having a GNN that is invariant to partial invocations of the message functions---as is the case with the level-3 $\log$-semiring architecture---may be particularly relevant, as it will reduce the variance caused by long trajectories of redundant messages. And indeed, in many representative algorithms of this class, such as DFS \citep{moore1959shortest}, articulation points, bridges and topological sorting \citep{knuth1973fundamental}, the level-3 architecture significantly reduces variance, or further improves generalisation, compared to the level-2 architecture.

We find these results, taken all together, to offer significant evidence to the utility of our theory, and we hope they can serve as an inspiration for follow-up studies and variations! In general, the level of asynchrony required by a task may not always be easily anticipated, and we foresee future work that constructs better-aligned architectures through the lens of various \emph{schedulers}.

\section{Conclusions} 

In this work we have taken a \emph{node-centric} perspective on the computation of the message passing mechanism. We note that this allows us to reason about the asynchrony of the updates on the node \emph{persistent} state, where the cocycle conditions are necessary in order to support asynchronous updates. We show that these conditions are respected for the Bellman-Ford algorithm as well as an extension of the recently-proposed PathGNN \citep{tang2020towards}, and can be used as a principle to discover other GNN formulations that allow complete asynchronous execution.

Through our analysis we provide a complementary perspective on formally describing computations carried by both algorithms and graph neural networks, which we believe to be a step forward towards further formalising the concept of algorithmic alignment that is widely relied on by approaches for neural algorithmic reasoning. Note that our approach does not compel actual GNNs to sacrifice their scalability by being asynchronous---rather, it imposes mathematical constraints on the GNNs' building blocks, such that, \emph{if we were to execute it in a particular asynchronous regime, it would behave identically to the target algorithm}. Such a correspondence, naturally, improves the level of algorithmic alignment enjoyed by the GNN---as is evident from our presented empirical results.

\bibliographystyle{unsrtnat}
\bibliography{log_2023}

\appendix
\section{Introduction to monoids}\label{app:monoids}

The central arguments of our paper rest on the construct of a \emph{monoid}. As we will unpack throughout this Appendix, monoids are an excellent abstraction for studying repeated computation in algorithms. We start with a preliminary overview of monoids, guided by the example of processing data in lists.

Consider a few common functions $f: \mathtt{list}(A) \to B$ whose input is a list type:

\begin{enumerate}
    \item $f(L) = \# L$ (the length function)
    \item $f(L) = \begin{cases*}
1 & if $\text{``hello''}\in L$ \\
0 & otherwise
\end{cases*}$, where $A=\mathtt{String}$ (the \emph{any} function)
    \item $f(L) = \prod_{a\in L} a$, where $A=\mathbb{R}$ (the product function on reals)
\end{enumerate}

In these cases (and many others), we are performing a \emph{fold}, that is, a repeated application of a binary operation over the list's elements. (possibly after a map) Specifically, for the above functions, this operation is:

\begin{enumerate}
    \item The operation $+$ on the set $\mathbb{N}$ of natural numbers.
    \item The operation $\mathtt{or}$ on the set $\{0, 1\}$ of Boolean values.
    \item The operation $\times$ on the set $\mathbb{R}$ of real numbers.
\end{enumerate}

The fact that these functions arise from such an operation has deep consequences for the parallelisation of the computation, as well as its semantics. Further, the binary operations considered above all have properties of interest. In all three cases, we may arbitrarily split the list into sublists, perform the computation in parallel on each sublist, and then compute the aggregate result---and this will not affect the final result of the fold operation.

With this is mind, we define a \emph{monoid} to be a triple $(M, \cdot, 1)$ consisting of a set $M$, a binary operation $\cdot : M\times M \to M$, and a ``neutral'' element $1\in M$, satisfying the following axioms for all $a,b,c,x\in M$:
\begin{equation}\label{monoid-axioms}
\begin{split}
1\cdot x = x \cdot 1 = x \\
a\cdot(b\cdot c) = (a\cdot b)\cdot c
\end{split}
\end{equation}
The reader may have encountered these axioms when studying \emph{groups}, particularly in the context of geometric deep learning \citep{bronstein2021geometric}. Groups have an additional axiom of \emph{invertibility}; therein, every element $x\in M$ must have an inverse element $x^{-1}\in M$ such that $x \cdot x^{-1} = x^{-1} \cdot x = 1$. That being said, we note that none of the three binary operations considered above are invertible, hence none of these structures are groups. Since programming is usually focused on non-invertible operations, groups arise quite rarely in algorithmic computation, while monoids are extremely common.

We also note a commonality with \emph{category theory}: a monoid is exactly the same thing as a category with only one object. We will not explore this perspective further here, but the reader can consult \citet{milewski2018category} for more information about the categorical approach to monoids.

When defining a mathematical structure, we should also say how to define \emph{arrows}, or structure-preserving maps, between them. A \emph{monoid homomorphism} $f: (M,\cdot_M, 1_M) \to (M', \cdot_{M'}, 1_{M'})$ is defined to be a function $f:M\to M'$ satisfying the following axioms.
\begin{equation}\label{homomorphism-axioms}
\begin{split}
f(1_M) &= 1_{M'} \\
f(a\cdot_M b) &= f(a)\cdot_{M'} f(b)
\end{split}
\end{equation}
Such a function preserves the structure contained in $M$ when mapping its elements into $M'$. If a homomorphism is bijective then its inverse is automatically a homomorphism; we call such a homomorphism an \emph{isomorphism}.

Just as with groups, we are typically not only interested in monoids themselves, but how they \emph{act} on data. A set $S$ is said to be equipped with a \emph{left $M$-action} when we have a binary operation $\bigcdot: M\times S \to S$ such that the following axioms hold for all $m,n\in M$ and $s\in S$:
\begin{equation}\label{action-axioms}
\begin{split}
1 \bigcdot s &= s \\
n\bigcdot (m\bigcdot s) &= (n\cdot m)\bigcdot s
\end{split}
\end{equation}
These axioms are equivalent to saying that we have a map $\rho:M\to [S,S]$ from $M$ to endofunctions on $S$, satisfying $\rho(1) = \mathrm{id}_S$ and $\rho(n\cdot m) = \rho(n)\circ \rho(m)$, where $\mathrm{id}_S$ is the \emph{identity function} on $S$\footnote{Defined as $\mathrm{id}_S(s)=s$ for all $s\in S$.}, and $\circ$ is function composition. The map $\rho$ is sometimes also referred to as a \emph{representation} of the monoid.

We will also make use of the notion of \emph{right $M$-action}, which is the same as the above but with all orders of operation reversed. Whenever not specified, actions are assumed to be on the left.

All monoids act on themselves. Specifically, if we define $m\bigcdot s := m\cdot s$ for $m,s\in M$, we can see by comparing Equations \ref{monoid-axioms} and \ref{action-axioms} that $\bigcdot$ satisfies the axioms of a left action. This is called the \emph{left regular representation} of $M$. Analogously, we can define a \emph{right regular representation}. This gives a large class of examples of monoid actions.


\subsection{Monoid actions as state machines}

Now, we justify why monoids are excellent for representing repeated computation, by relating monoid actions to the foundational computer science concept of \emph{state machines}\footnote{Unlike the more standard computer science construct, the state machines we consider here may be infinite. Constraining the states and transitions in an appropriate way may allow us to recover standard hierarchies of computability, such as the Chomsky hierarchy.}.

Recall that we use the notation $[A, B]$ to denote the set of all possible functions mapping $A$ to $B$.

Define a \emph{state machine} to be a triple $(S,\Sigma, \tau)$, consisting of a set $S$ of \emph{states}, a set $\Sigma$ of \emph{symbols}, and a \emph{transition function} $\tau: \Sigma \to [S,S]$ that interprets each symbol as a transition of states. In many cases, state machines also have distinguished starting and accepting states, but these are not important for the present discussion.

Given a monoid $M$ and a set $S$ equipped with an $M$-action $\rho$, we can quickly conclude that $(S, M, \rho)$ fits the definition of a state machine. However, it turns out that we can go in the reverse direction as well: every state machine can be seen as specifying an action of a monoid.

To see this, suppose that $(S,\Sigma, \tau)$ is a state machine, and let $\Sigma^\ast$ denote the collection of words drawn from $\Sigma$. That is, $\Sigma^\ast$ comprises strings of the form $\sigma = \sigma_1 \cdots \sigma_n$ (where $\sigma_i\in\Sigma$), including the empty string, $\emptyset$. Note that, for each such word $\sigma\in\Sigma^\ast$, we have an associated composed transformation $\tau_\sigma := \tau(\sigma_1)\circ \cdots \circ \tau(\sigma_n)$. This function $\tau_\sigma\in [S,S]$ describes the final state reached after consuming the sequence of symbols in $\sigma$, for a given initial state.

Then, the set of all composed transitions, $M=\{\tau_\sigma \mid \sigma\in\Sigma^\ast \}$ forms a monoid $(M, \circ, \mathrm{id}_S)$ under the function composition operation $\circ$, with the identity transformation as the neutral element. Further, $S$ can be equipped with an $M$-action, defined by $\tau_\sigma \bigcdot s := \tau_\sigma(s)$, i.e. $\rho = \tau$.

Note that some information contained within the state machine is lost when converting it to a monoid in this way. To recover the original state machine, we must specify the function $\Sigma \to M$ assigning each symbol to its transformation. However, since this amounts to labeling a particular subset of transitions\footnote{Specifically, the subset being labelled needs to be a \emph{generating set}, meaning that every element of the monoid can be obtained from some composition of labelled elements.}, it does not affect the structure of the computations performed by the state machine.

But information is also lost when passing from monoid actions to state machines, since we have neglected the monoid structure of $M$ by discarding its binary operation and neutral element. The only structure that remains is contained in the monoid action, $\bigcdot$, but there are no guarantees that it will fully disambiguate different elements of the monoid.

To formalise this, we say that a monoid action, $\bigcdot : M\times S\to S$, is \emph{faithful} if, for all $m, n\in M$, if $m$ and $n$ are different, they must act differently on some state in $S$ using $\bigcdot$. Or, equivalently,
\begin{equation}
    (\forall s\in S.\ m\bigcdot s = n\bigcdot s) \Longleftrightarrow m = n
\end{equation}
If a monoid action is not faithful, then we will have at least two \emph{different} monoid elements mapping to exactly the same transitions in the state machine, and hence they will be indistinguishable. In general, state machines correspond to only the faithful actions. As we will see in Example \ref{number-wheels}, not all monoid actions are faithful.

\begin{example}\label{open-closed}
A standard example of a state machine consists of two states, $S=\{\operatorname{door\ open}, \operatorname{door\ closed}\}$, with $\Sigma = \{\operatorname{open}, \operatorname{close}\}$ performing the obvious transitions, with the convention that $\operatorname{open}$ does nothing to the open door, and $\operatorname{close}$ does nothing to the closed door.

\begin{center}
\begin{tikzpicture}
\node[state, align=center, minimum width=5em] (q_0) {door\\ open};
    \node[state, align=center, minimum width=5em,right=5em of q_0] (q_1) {door\\ closed};
    \draw[-stealth] (q_0) edge[bend left] node[above] {close} (q_1);
    \draw[-stealth] (q_1) edge[bend left] node[below] {open} (q_0);
\end{tikzpicture}
\end{center}

To see how to interpret this as a monoid action, we note the following composition laws:
\begin{equation}
\begin{split}
    \operatorname{open} \circ \operatorname{open} = \operatorname{open} \\
    \operatorname{open} \circ \operatorname{close} = \operatorname{open} \\
    \operatorname{close} \circ \operatorname{open} = \operatorname{close} \\
    \operatorname{close} \circ \operatorname{close} = \operatorname{close} \\
\end{split}
\end{equation}
Such a structure already exhibits the required associativity properties. In order to obtain a monoid from it, note that we are missing a neutral transformation. One simple way to achieve this is to augment $\Sigma$ with a single symbol, the identity $\operatorname{no-op}$. So we have $M = \{\operatorname{no-op}, \operatorname{open}, \operatorname{close}\}$. We can visualise the action of $M$ on $S$ by showing how these operations act on both states:

\begin{center}
\begin{tikzpicture}
\node[state, align=center, minimum width=5em] (q_0) {door\\ open};
    \node[state, align=center, minimum width=5em,right=5em of q_0] (q_1) {door\\ closed};
\draw[-stealth] (q_0) edge[bend left] node[above] {close} (q_1);
  \draw[-stealth] (q_1) edge[bend left] node[below] {open} (q_0);


  
  \draw[-stealth] (q_0) edge[in=136.3,out=106.3,loop, >=stealth] node[above] {open} (q_0);
  \draw[-stealth] (q_0) edge[in=253.7,out=223.7,loop, >=stealth] node[below] {no-op} (q_0);
  \draw[-stealth] (q_1) edge[in=73.7,out=43.7,loop, >=stealth] node[above] {close} (q_1);
  \draw[-stealth] (q_1) edge[in=316.3,out=286.3,loop, >=stealth] node[below] {no-op} (q_1);

\end{tikzpicture}
\end{center}

\end{example}

\begin{example}\label{number-wheels}

Let $W_n$ be the set $\{0,1,\ldots , n-1\}$, considered modulo $n$. We can make $W_n$ into a (modular) counting state machine with $\Sigma = \{+1\}$. Here is a picture when $n=3$:

\begin{center}
\begin{tikzpicture}

\foreach \a in {0,1,2}{
\draw (\a*360/3: 2cm) node[state,align=center] (q_\a) {$\a$};
}
  \draw[-stealth] (q_0) edge[bend right=45] node[above right] {$+1$} (q_1);
  \draw[-stealth] (q_1) edge[bend right=45] node[above left] {$+1$} (q_2);
  \draw[-stealth] (q_2) edge[bend right=45] node[below] {$+1$} (q_0);

\end{tikzpicture}
\end{center}

Note that this state machine has a cyclic symmetry, so for example we get the same state machine if we relabel $0\mapsto 1$, $1\mapsto 2$, $2\mapsto 0$.  For this reason, we do not identify $W_n$ with the set $\mathbb{Z}/n$, as the latter has an unambiguous additive unit element, $0$.

As above, in order to get a monoid we need to augment with any remaining transitions associated with sequences of symbols: the no-op ($+0$) corresponding to an empty string, as well as $+2 := (+1)\circ (+1)$ and more generally $+k := (+1)^{\circ k}$. The corresponding state machine for $n=3$ now looks like this:

\begin{center}
\begin{tikzpicture}

\foreach \a in {0,1,2}{
\draw (\a*360/3: 2cm) node[state,align=center] (q_\a) {$\a$};
}
  \draw[-stealth] (q_0) edge[bend right=45] node[above right] {$+1$} (q_1);
  \draw[-stealth] (q_1) edge[bend right=45] node[above left] {$+1$} (q_2);
  \draw[-stealth] (q_2) edge[bend right=45] node[below] {$+1$} (q_0);
  \draw[-stealth] (q_1) edge node[below left] {$+2$} (q_0);
  \draw[-stealth] (q_0) edge node[above left] {$+2$} (q_2);
  \draw[-stealth] (q_2) edge node[right] {$+2$} (q_1);

  \draw[-stealth] (q_0) edge[loop right, >=stealth] node[right] {$+0$} (q_0);
  \draw[-stealth] (q_1) edge[in=135,out=105,loop, >=stealth] node[above] {$+0$} (q_1);
  \draw[-stealth] (q_2) edge[in=255,out=225,loop, >=stealth] node[below] {$+0$} (q_2);

\end{tikzpicture}
\end{center}

But we note that, because of the modular nature of the counting done by the state machine, we have $+n = +0$. This implies that, from the perspective of the state machine, all transitions $+k$ with $k\geq n$, are already described by the shorter ones. Accordingly, the monoid whose action corresponds to this state machine is the \emph{cyclic group} of order $n$, denoted by $(\mathbb{Z}/n, +, 0)$.

However, it can be useful to think of $W_n$ as having an action of the monoid spanning the entire set of natural numbers $(\mathbb{N}, +, 0)$\footnote{Note that $(\mathbb{N}, +, 0)$ is not a group, as there are no inverses.}, rather than just $(\mathbb{Z}/n, +, 0)$. If we define $k\bigcdot s := (+k)(s)$, we see that the axioms for an action are satisfied, even if $k\geq n$. But since, e.g., $0\bigcdot s = n\bigcdot s$ for all $s\in W_n$, this is no longer a faithful action, and hence it does not arise from a state machine. We will discuss how to repair this defect in the following section.

\end{example}

\subsection{Extension problems and cocycles}

If $f: M\to M'$ is a morphism of monoids, we define the \emph{kernel}, $\ker{f} := f^{-1}(1_{M'})$, as the set of all elements of $M$ that map to $1_{M'}$. For general monoids, the kernel is not very informative. But when $M$ and $M'$ are groups, the kernel gives us quite a bit of information about $f$---for example, $f$ is injective if and only if $\ker{f}=\{1_M\}$.

A classic question in group theory, and a motivating example for the development of group cohomology, is this: Given two groups $G$ and $A$, how can we describe all the groups $G'$ with a surjective homomorphism $f:G'\to G$, such that $\ker{f}$ is isomorphic to $A$? In other words, we seek to describe all groups $G'$ that can be mapped to $G$ with a function $f : G'\to G$, in such a way that:
\begin{itemize}
    \item All elements of $G$ are mapped by some element in $G'$, i.e., $\forall x\in G.\exists x'\in G'. f(x') = x$, and
    \item The structure of all elements of $G'$ that map into the neutral element for $G$ ($1_G$) is isomorphic to the structure of $A$.
\end{itemize}
Such groups $G'$ are called \emph{extensions of $G$ by $A$}. For example, if $G=\mathbb{Z}/2$ and $A=\mathbb{Z}/3$, then one can easily define a ``trivial'' extension $\mathbb{Z}/6 \cong \mathbb{Z}/2\times \mathbb{Z}/3$. This extension is valid because we can map elements of $\mathbb{Z}/6$ to all elements in $G$ by using the function $x\mapsto x\pmod{2}$. Then, all elements of $\mathbb{Z}/6$ mapping to $0$ would be $\{0, 2, 4\}$, which has exactly the same structure as $A$.

Note that there also exists a nontrivial extension of $\mathbb{Z}/2$ by $\mathbb{Z}/3$: the symmetric group $S_3$, whose elements are permutations of three elements. Here the surjection $S_3 \to \mathbb{Z}/2$ is given by taking a permutation to its \emph{sign}.

The analogous question of extensions for two monoids is much harder than for groups, due to severe technical challenges in defining cohomology, and we will not attempt to analyse it in detail here. But we will show how certain functions---which will correspond to the notion of ``cocycle'' we used in the body of our paper---may be used to construct certain extensions of monoids.

Within the state machine $W_n$ given by our previous example, we were unable to distinguish between the instructions $+0$ and $+n$. We can recall from our early education why this is: in the transition from state $n-1$ to state $0$, we should produce a \emph{carry}, which would itself be applied to advance a different state machine. If this next machine also has the structure of $W_n$, this process can be iterated, giving us arithmetic in base $n$.

Focusing on just the ones digit, we can see that carrying is really describing $\mathbb{N}$ as an extension of $G=\mathbb{Z}/n$, the monoid associated to the state machine, by $A=n\mathbb{N}\cong \mathbb{N}$, the monoid for carries. That is, $n\mathbb{N}$ is the kernel of the natural map $\mathbb{N}\to \mathbb{Z}/n$.

Let's look in more detail at how this works, in terms of the state machine $S=W_n$ acted on by $M=\mathbb{N}$, and producing carries in $A=\mathbb{N}$. We describe carrying as a function $\delta: M\times S \to A$. Given an instruction $m$ acting on a state $s$, $\delta_m(s)$ describes the carry produced during the transition $s\mapsto m\cdot s$.

For $\delta$ to be consistent with the equations for a monoid action, we need two things to be true. First, the identity transition $s\mapsto 1_M \cdot s = s$ must produce an identity carry. And second, given two instructions $m$ and $n$, we must get the same total carry from the transition $s\mapsto (nm)\cdot s$ and the pair of transitions $s\mapsto m\cdot s \mapsto n\cdot (m\cdot s)$. We can summarise this in the following pair of equations, which we'll call the \emph{carry equations}:
\begin{equation}\label{eqn:carryeq}
\begin{split}
\delta_{1}(s) &= 0 \\
\delta_{n\cdot m}(s) &=  \delta_n(m\bigcdot s) + \delta_m(s)
\end{split}
\end{equation}
Here we have written $(M,\cdot, 1)$ in multiplicative notation and $(A, +, 0)$ in additive notation, to avoid confusion between the two operations.

In the carry example, satisfying Equation \ref{eqn:carryeq} guarantees we can process and emit carries for multiple numbers to be added in any order. For example, if we want to make two actions, $(+2), (+3)$ on a state $s\in W_n$, we could either emit a carry for $s + 2$ and then combine it with another carry emitted for $(s + 2) + 3$, or only emit a single carry for $s + 5$---the final carry is guaranteed to be the same. 

Now it is worthwhile to recall that, in the context of graph neural networks, this is exactly the abstraction we used to reason about the interplay between the incoming messages in a node ($M$) and the emitted node arguments ($A$) in response to those messages. In the GNN example, should its update function, $\phi$, satisfy a condition as in Equation \ref{eqn:carryeq}, when updating a node's state based on incoming messages $\vec{n}$ and $\vec{m}$, we can either first aggregate them and emit one update ($\phi(\vec{x}, \vec{n}\bigoplus\vec{m})$), or update based on the first one, then update based on the second one ($\phi(\phi(\vec{x}, \vec{n}), \vec{m})$).

To relate the carry equation back to extensions, we can verify the following by direct calculation:

\begin{proposition}
The rule $m\bigcdot (s, a) := (m\bigcdot s, \delta_m(s) + a)$ is a monoid action of $M$ on $S\times A$ if and only if $\delta$ satisfies the carry equations.
\end{proposition}

To bring this in line with more standard mathematical machinery, we note that the set of ``readout functions'' $[S, A]$ inherits an action of $M$ from $S$: $(f\bigcdot m)(s) := f(m\bigcdot s)$. Note that this is a \emph{right} action, while the action on $S$ is a \emph{left} action.

Writing $\delta$ in its curried form $D: M\to [S, A]$, we can rewrite the carry equation as follows:
\begin{equation}\label{cocycle-appendix}
\begin{split}
D(1) &= 0 \\
D(n\cdot m) &= D(n)\bigcdot m + D(m)
\end{split}
\end{equation}
Functions $D$ with these properties appear in the literature under three names: \emph{crossed homomorphisms} (to emphasise that this specialises to Equation \ref{homomorphism-axioms} when $M$ acts trivially), \emph{derivations} (to emphasise that this generalises to the Leibniz equation when $M$ acts nontrivially on the left as well as the right), and \emph{$1$-cocycles} (to emphasise the possibly higher-dimensional situation of $k$-cocycles $M^k \to [S, A]$).

Note that it is more common to describe these objects in terms of left actions rather than right actions. But the two viewpoints are compatible, which we discuss in more detail in Appendix \ref{app:stars}.

\section{Star-monoids and group cocycles}\label{app:stars}

We have related left actions of a monoid $M$ to state machines. The reader may ask: does the right action on $[S,A]$ we describe above also relate to state machines?

Define $M^{op}$, the \emph{opposite} of $M$, to be the monoid with the same underlying set as $M$, but reversed multiplication. Then it is easy to verify that a left action of $M$ is equivalent to a right action of $M^{op}$, and likewise a left action of $M^{op}$ is equivalent to a right action of $M$.

So $[S,A]$ can be interpreted as a state machine acted on by $M^{op}$. But with some additional structure, we will show that we can reinterpret this as a state machine acted on by $M$.

Define a \emph{star-monoid} to be a monoid $M$ equipped with an isomorphism $\ast: M\to M^{op}$ between $M$ and its opposite. A homomorphism of star-monoids $f:M\to N$ is a homomorphism of the underlying monoids, satisfying $f(m^\ast) = f(m)^\ast$.

There is also a short categorical definition of star-monoids. Just as monoids can be identified with one-object categories, star-monoids can be identified with one-object \emph{dagger categories}, a common tool in category theory.  For example, the category of real vector spaces with a fixed basis has a dagger structure given by the transpose.

The category of commutative monoids has an embedding into the category of star-monoids: we simply let $\ast$ be the identity function. By commutativity, $\ast$ is an isomorphism $M\to M^{op}$.

But groups are also star-monoids, in a completely different way. In any group $G$, we have the equation $(gh)^{-1} = h^{-1}g^{-1}$, so $g^\ast = g^{-1}$ gives an isomorphism $G\to G^{op}$. So the category of groups embeds into the category of star monoids (beware: abelian groups can be treated as either commutative monoids or groups, and the two approaches give incompatible stars except in special cases).

This means that, if $M$ has the structure of a star-monoid, we can rewrite Equation \ref{cocycle-appendix} as $D(n\cdot m) =  m^\ast \bigcdot D(n) + D(m)$. So, if $G$ and $A$ are groups and $A$ is abelian, we can take $M=G^{op}$ and our definition coincides with the standard definition of a group 1-cocycle for $G$ with coefficients in $[S,A]$.\footnote{We can even get non-abelian cocycles by replacing $A$ with $A^{op}$ as well.}

In fact, it is a completely standard technique to define the left action of a group $G$ on functions using the opposite group, i.e. $(gf)(s) := f(g^\ast s) = f(g^{-1}s)$, a definition which is heavily leveraged in group convolutional neural networks \citep{cohen2016group}.

Since this technique of converting from right actions to left actions produces exactly the correct definitions for groups, we propose that star-monoids, and even more generally, dagger-categories, are an appropriate setting for generalising equivariant convolutions to the setting of monoids and categories---but we leave this to future work.


\end{document}